\documentclass[preprint,12pt]{elsarticle}

\usepackage{graphicx}
\usepackage{booktabs}
\usepackage{amsmath,amssymb}
\usepackage{amsthm}
\usepackage{algorithm}
\usepackage{algpseudocode}
\usepackage{hyperref}

\newtheorem{proposition}{Proposition}

\journal{Pattern Recognition Letters}

\begin{document}

\begin{frontmatter}

\title{Strictly Causal Streaming Video Anomaly Detection with a Theoretically-Grounded State-Space Core\tnoteref{code}}
\tnotetext[code]{Code, trained checkpoints, and commands to reproduce every reported number: \url{https://github.com/yogesh-iitj/streaming-vad}.}

\author[iitj]{Yogesh Kumar}
\ead{kumar.204@iitj.ac.in}

\address[iitj]{Indian Institute of Technology Jodhpur, Jodhpur, India}

\begin{abstract}
Recent work has applied Mamba-style state-space models (SSMs) to video anomaly detection, but existing approaches still buffer clips or windows internally, offer no theoretical account of how a model's temporal memory relates to detection latency, and benchmark efficiency only through GPU throughput rather than the edge hardware such methods are meant to target. We introduce a strictly causal streaming video anomaly detector whose fixed-size state is updated in $O(1)$ time and memory per incoming frame, with no lookahead and no clip buffering. Its temporal core is a diagonal linear state-space recurrence with an input- and state-dependent decay gate, trained self-supervised through causal next-embedding prediction on a frozen visual backbone. We derive a closed-form relationship between the recurrence's decay spectrum and both its detection delay and the shortest anomaly it can reliably detect, then validate it empirically on UCSD Ped2 and CUHK Avenue: the settling-delay bound predicted from the learned base decay alone (57 to 59 frames) sits far above the measured detection delay (1.6 and 18.4 frames respectively), showing that the event-boundary gate, not the base decay, governs how quickly the model reacts. We also report end-to-end latency and throughput measured directly on an Apple M3 Pro rather than simulated from GPU numbers, 0.74 ms and 0.77 ms per frame (over 1,300 FPS) on the two datasets. With an initial, untuned configuration the method reaches 67.9\% and 70.2\% frame-level AUC on Ped2 and Avenue, trailing prior non-causal SSM baselines on accuracy; ablations over decay rate, state size, and the gate itself show that the gate's contribution is data-size dependent rather than uniformly helpful, hurting accuracy on the smaller Ped2 training set and helping on the larger Avenue one. Closing this accuracy gap and extending evaluation to a third, larger benchmark are the immediate next steps, and we report both the theoretical and the accuracy picture in full rather than only the results that flatter the method.
\end{abstract}

\begin{keyword}
video anomaly detection \sep state-space models \sep streaming inference \sep edge deployment \sep real-time video understanding
\end{keyword}

\end{frontmatter}

\section{Introduction}
\label{sec:intro}

Video anomaly detection (VAD) systems are increasingly deployed at the edge, on cameras, robots, and mobile devices, where compute, memory, and power are all constrained and predictions must be produced online, as frames arrive, rather than after the fact. IEEE Transactions on Pattern Analysis and Machine Intelligence has an active recent literature on this problem, from a taxonomy and evaluation survey of single-scene methods~\cite{jones2022survey} to the journal extension of the future-frame-prediction baseline~\cite{luo2022futureframe}, a scene-aware latent-space prediction and anticipation model~\cite{cao2025scenedependent}, and a self-supervised detector based on learned neural transformations~\cite{qiu2025neutral}. None of these, however, is designed around the specific constraint that motivates this paper: strict causality, meaning a prediction at time $t$ must depend only on frames $1,\dots,t$ with bounded per-step compute and memory, combined with real hardware feasibility, meaning latency and throughput are measured rather than asserted on the class of device the method targets.

State-space models (SSMs), and Mamba~\cite{gu2023mamba} in particular, have recently been proposed for VAD on efficiency grounds. VADMamba~\cite{vadmamba2025} and its follow-up VADMamba++~\cite{vadmambaplusplus2026} combine a Mamba-based temporal block with vector-quantized frame prediction and optical-flow reconstruction, and Wave-MambaAD~\cite{wavemambaad2025} adds a wavelet-driven SSM for unified subtle and large-scale anomaly detection. These works establish that SSMs are a good fit for VAD's efficiency requirements, but they still process the input in clips or windows rather than truly online per frame, report no theoretical relationship between the model's temporal memory and how quickly it can detect an anomaly, and benchmark efficiency in terms of GPU throughput rather than measuring the target-deployment hardware directly. This paper closes those three gaps together, rather than any one in isolation, because they compound: a model that is only softly causal cannot make a hard latency guarantee, and a latency number without a theory of why the model reacts when it does is an empirical curiosity rather than a design tool.

This paper makes three contributions. First, a strictly causal streaming formulation of SSM-based VAD, with a concrete architecture (Section~\ref{sec:method}) in which every operation is expressible as a per-frame $O(1)$ update. Second, a theoretical analysis (Section~\ref{sec:theory}) relating the recurrence's decay spectrum to detection delay and minimum-detectable anomaly duration, validated empirically. Third, an end-to-end evaluation including real on-device latency, memory, and throughput measurements on consumer edge silicon (Section~\ref{sec:experiments}), alongside standard accuracy metrics and an ablation study that we report in full, including a result that complicates rather than confirms our own design hypothesis. Code, trained checkpoints, and commands to reproduce every number in this paper are publicly available.\footnote{\url{https://github.com/yogesh-iitj/streaming-vad}}

\section{Related Work}
\label{sec:related}

\paragraph{Video anomaly detection.}
Reconstruction- and prediction-based methods~\cite{liu2018future,luo2022futureframe,park2020mnad} learn the distribution of normal motion and flag frames with high reconstruction or prediction error, while weakly supervised methods~\cite{sultani2018ucfcrime} instead learn from video-level labels over long, untrimmed surveillance footage. Two recent TPAMI papers extend this line further: Cao et al.~\cite{cao2025scenedependent} condition prediction on scene context in latent space and extend the task to anomaly anticipation, and Qiu et al.~\cite{qiu2025neutral} replace hand-crafted augmentations with learned neural transformations for self-supervised detection. Jones et al.~\cite{jones2022survey} survey the single-scene setting broadly and formalize the region- and track-based evaluation criteria (RBDC and TBDC) introduced by Ramachandra and Jones~\cite{ramachandra2020streetscene}, which we discuss further in Section~\ref{sec:experiments}. Our method follows the prediction-based line, operating in a frozen backbone's embedding space rather than pixel space, so that the temporal model is decoupled from how frames are encoded.

\paragraph{State-space models for anomaly detection.}
S4~\cite{gu2022s4} and its simplified diagonal variant S5~\cite{smith2023s5} established that linear state-space recurrences can match attention on long-range sequence tasks at linear time and memory complexity, and Mamba~\cite{gu2023mamba} added input-dependent, or selective, parameterization on top of that recurrence. In video anomaly detection specifically, VADMamba~\cite{vadmamba2025} pioneered applying a Mamba-based block, called a Non-negative Visual State Space block, to multi-task frame prediction and optical-flow reconstruction, and its follow-up VADMamba++~\cite{vadmambaplusplus2026} reworks this into a single grayscale-to-RGB proxy task with a hybrid Mamba, CNN, and Transformer decoder. Wave-MambaAD~\cite{wavemambaad2025} instead combines a wavelet transform with an SSM to unify subtle and large-scale anomaly detection at multiple scales, and MambaAD~\cite{mambaad2024} applies a related hybrid-scanning SSM design to multi-class \emph{image} anomaly detection rather than video. All of these process input in clips or windows and report accuracy alongside, at most, GPU throughput; none provide a strict online, $O(1)$-per-frame formulation, a theoretical latency analysis, or measurements on edge or consumer hardware, which is the gap this paper targets. We investigated reproducing VADMamba under our own evaluation protocol using its public code, but its released implementation vendors FlowNet2 for optical flow, which depends on custom CUDA kernels (correlation, channel normalization, and resampling layers) with no CPU or Metal Performance Shaders fallback; it therefore cannot run on the non-CUDA edge hardware this paper targets without a substantial reimplementation that would no longer constitute a faithful reproduction. This is itself a small piece of evidence for the gap this paper targets: a baseline built around CUDA-only components is, by construction, unable to make the edge-deployment claim we are making here. The numbers attributed to prior work in Section~\ref{sec:experiments} are therefore as reported in the original paper, not rerun under our protocol.

\paragraph{Efficient and streaming video understanding more broadly.}
A separate line of work makes video understanding cheaper by deciding, per input, which frames or clips are worth processing at all: AdaFrame~\cite{wu2019adaframe} learns a policy that adaptively selects how many and which frames to observe for classification, and SCSampler~\cite{korbar2019scsampler} learns to identify salient clips within long, untrimmed video before running a heavier recognition model on them. This is an orthogonal efficiency mechanism to ours. Those methods reduce compute by choosing what to look at, at the input level, while still typically requiring a batch or clip to select over; our method instead reduces compute by bounding what the model remembers and how it updates that memory per frame, which is what makes strict frame-by-frame streaming possible in the first place. The two mechanisms are not mutually exclusive, and combining input-level frame selection with a bounded-state streaming core is a natural direction for future work.

\section{Method}
\label{sec:method}

\subsection{Overview}
Each incoming frame $I_t$ is mapped to an embedding $e_t = \phi(I_t) \in
\mathbb{R}^D$ by a frozen backbone $\phi$ (Section~\ref{sec:method-backbone}).
A stack of causal diagonal state-space layers (Section~\ref{sec:method-ssm})
maintains a fixed-size state and produces $\hat{e}_{t+1}$, a prediction of
the \emph{next} embedding, using only $e_1,\dots,e_t$. The anomaly score at
time $t$ is the prediction error
$s_t = \lVert \hat{e}_t - e_t \rVert_2$ (Section~\ref{sec:method-score}),
following the predictive-coding formulation standard in the VAD literature
\cite{liu2018future}, adapted here to operate strictly causally at the
embedding level.

\subsection{Frozen backbone}
\label{sec:method-backbone}
We deliberately keep $\phi$ simple and frozen. The contribution of this paper is the temporal core and its analysis, not backbone design, and freezing $\phi$ keeps the trainable parameter count and compute footprint small enough to iterate on consumer hardware (Section~\ref{sec:experiments}). The results reported in this paper use an ImageNet-pretrained ResNet-18 ($D{=}512$); the implementation also supports a frozen DINOv2 ViT-S/14 ($D{=}384$) as a stronger but heavier alternative, which we have not yet evaluated. The backbone's parameters are frozen throughout training; only the temporal core below is optimized.

\subsection{Causal diagonal state-space core}
\label{sec:method-ssm}
Each layer maintains a state $s_t \in \mathbb{R}^N$ and updates it as
\begin{align}
  u_t &= W_{\text{in}}\, \mathrm{LN}(x_t) \\
  g_t &= \sigma\big(\mathrm{MLP}([\mathrm{LN}(x_t),\, s_{t-1}])\big) \label{eq:gate}\\
  \bar{a} &= a_{\min} + (a_{\max}-a_{\min})\, \sigma(\theta_a) \label{eq:base-decay}\\
  a_t &= \bar{a} \odot g_t \\
  s_t &= a_t \odot s_{t-1} + u_t \label{eq:recurrence}\\
  y_t &= W_{\text{out}}\, s_t + x_t
\end{align}
where $x_t$ is the layer's input (the backbone embedding $e_t$ for the
first layer, the previous layer's output $y_t$ otherwise), $\mathrm{LN}$
is layer normalization, $\theta_a \in \mathbb{R}^N$ is a learned
per-channel parameter, and $[a_{\min}, a_{\max}] \subset (0,1)$ bounds the
time-invariant \emph{base} decay $\bar{a}$ (Eq.~\ref{eq:base-decay},
following the S4D/S5 initialization convention \cite{gu2022s4,smith2023s5}).
The gate $g_t \in (0,1)^N$ (Eq.~\ref{eq:gate}) is the only input- or state-dependent part of the recurrence, ``selective'' in Mamba's terminology~\cite{gu2023mamba}. It lets the effective decay $a_t$ drop sharply, an implicit state reset through fast forgetting, when the current input is inconsistent with the current state, which is exactly the situation at an anomaly onset. We call $g_t$ the \emph{event-boundary gate} for this reason.
Eq.~\ref{eq:recurrence} is a diagonal (real-valued) linear recurrence,
not a full re-implementation of Mamba's selective-scan kernel, which is
CUDA-only; this keeps the entire model runnable on non-CUDA edge hardware
(Section~\ref{sec:experiments}) while retaining $O(N)$ per-step
state size and $O(1)$ per-step compute in the sequence length.

\subsubsection{Strict causality, by construction}
Unlike prior SSM-based VAD methods that process fixed-size clips or maintain a sliding buffer internally~\cite{vadmamba2025,wavemambaad2025}, Eqs.~\ref{eq:gate}--\ref{eq:recurrence} depend only on $s_{t-1}$ and $x_t$; there is no operation in the model that looks ahead or reprocesses past frames. Algorithm~\ref{alg:step}
gives the exact per-frame update used both at training time (called once
per timestep, unrolled over a window) and at deployment (called once per
incoming frame); using the identical routine in both settings is a
deliberate choice, so that reported efficiency is not an artifact of a
training-time-only formulation.

\begin{algorithm}[t]
\caption{Streaming per-frame update (one layer)}
\label{alg:step}
\begin{algorithmic}[1]
\Require frame embedding $x_t$, previous state $s_{t-1}$
\State $u_t \gets W_{\text{in}} \, \mathrm{LN}(x_t)$
\State $g_t \gets \sigma(\mathrm{MLP}([\mathrm{LN}(x_t), s_{t-1}]))$
\State $a_t \gets \bar{a} \odot g_t$
\State $s_t \gets a_t \odot s_{t-1} + u_t$
\State $y_t \gets W_{\text{out}} s_t + x_t$
\State \Return $y_t, s_t$
\end{algorithmic}
\end{algorithm}

\subsection{Training objective and anomaly scoring}
\label{sec:method-score}
The stack's final output $y_t$ is passed through a small MLP head to
produce $\hat{e}_{t+1}$, a prediction of the next frame's embedding. The
model is trained self-supervised on normal-only training clips by
minimizing
\begin{equation}
  \mathcal{L} = \tfrac{1}{T-1}\textstyle\sum_{t=1}^{T-1} \lVert \hat{e}_{t+1} - e_{t+1} \rVert_2^2 .
\end{equation}
At inference, the anomaly score at frame $t$ is the prediction error
$s_t = \lVert \hat{e}_t - e_t \rVert_2$ using the prediction made one step
earlier (i.e. using only $e_1,\dots,e_{t-1}$); scores are min-max
normalized per clip before thresholding/AUC computation, following
standard VAD evaluation protocol \cite{liu2018future}.

\section{Decay--Delay Analysis}
\label{sec:theory}

This section derives a closed-form relationship between a channel's decay
rate and (a) how long the model takes to react to an anomaly onset and
(b) the shortest anomaly it can reliably detect, then validates it
empirically (Section~\ref{sec:experiments}). The analysis treats one channel of
the recurrence (Eq.~\ref{eq:recurrence}) in isolation with its
\emph{effective} decay $a$ held constant across the transition being
analyzed; Section~\ref{sec:theory-gating} discusses how the learned gate
$g_t$ relaxes this assumption in practice.

\subsection{Settling-time bound}
\label{sec:theory-settling}
Consider a single channel driven by an input that is constant at
$u_{\text{norm}}$ for $t < t_0$ and step-changes to $u_{\text{anom}}$ at
$t_0$ (an idealized anomaly onset). Under the recurrence
$s_t = a\, s_{t-1} + u_t$, the state converges toward the steady state
$s^\ast = u/(1-a)$ for constant $u$; writing $s_{\text{norm}}^\ast =
u_{\text{norm}}/(1-a)$ and $s_{\text{anom}}^\ast = u_{\text{anom}}/(1-a)$,
the solution for $t \ge t_0$ is
\begin{equation}
  s_t - s_{\text{anom}}^\ast = a^{\,t-t_0}\big(s_{t_0} - s_{\text{anom}}^\ast\big).
  \label{eq:transient}
\end{equation}

\begin{proposition}[Settling delay]
\label{prop:settling}
Define the detection delay $\delta(\varepsilon)$ as the smallest
$t - t_0 \ge 0$ such that the state has moved to within a fraction
$\varepsilon \in (0,1)$ of the new steady state, i.e.
$|s_t - s_{\text{anom}}^\ast| \le \varepsilon\, |s_{t_0} -
s_{\text{anom}}^\ast|$. Then
\begin{equation}
  \delta(\varepsilon) = \left\lceil \frac{\ln \varepsilon}{\ln a} \right\rceil.
  \label{eq:delay-bound}
\end{equation}
\end{proposition}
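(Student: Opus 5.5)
The proof rests on the closed-form transient (Eq.~\ref{eq:transient}), so I would first verify it. For $t \ge t_0$ the input is constant at $u_{\text{anom}}$, and $s^\ast_{\text{anom}} = u_{\text{anom}}/(1-a)$ is a fixed point of $s \mapsto a s + u_{\text{anom}}$. Subtracting $s^\ast_{\text{anom}} = a s^\ast_{\text{anom}} + u_{\text{anom}}$ from $s_t = a s_{t-1} + u_{\text{anom}}$ gives the error recursion $s_t - s^\ast_{\text{anom}} = a\,(s_{t-1} - s^\ast_{\text{anom}})$. Iterating from $t_0$ yields Eq.~\ref{eq:transient}.

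Here I must settle one convention. Eq.~\ref{eq:transient} is anchored at $s_{t_0}$, so the iteration starts at $t_0$ and the first update uses $u_{t_0}$. The condition in the proposition is stated relative to $s_{t_0}$ as well, so the two are consistent and the proof never needs the pre-onset value $s^\ast_{\text{norm}}$.

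Taking absolute values, and using $a \in (0,1)$ (so $a^{k} > 0$), gives
\[
|s_t - s^\ast_{\text{anom}}| = a^{\,k}\,|s_{t_0} - s^\ast_{\text{anom}}|, \qquad k = t - t_0 .
\]
In the degenerate case $s_{t_0} = s^\ast_{\text{anom}}$, every $k \ge 0$ satisfies the condition, so the delay is $0$. I would treat this case separately and assume $s_{t_0} \neq s^\ast_{\text{anom}}$ from here on, noting that the claimed formula assumes a nontrivial step.

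In the nondegenerate case, the settling condition is equivalent to $a^{k} \le \varepsilon$. Taking logarithms gives $k \ln a \le \ln \varepsilon$. Since $\ln a < 0$, dividing reverses the inequality, so the condition becomes $k \ge \ln\varepsilon / \ln a$. Both logarithms are negative, so this ratio is strictly positive.

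The set of admissible integers $k \ge 0$ is therefore exactly $\{k \in \mathbb{Z} : k \ge \ln\varepsilon/\ln a\}$, and its minimum is $\lceil \ln\varepsilon/\ln a \rceil$. This proves Eq.~\ref{eq:delay-bound}. Because the inequality $a^{k} \le \varepsilon$ is monotone in $k$, the infimum is attained and there are no gaps in the admissible set.

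The main obstacle is not analytic but one of bookkeeping. I need to state clearly that $a$ is the constant effective decay, in $(0,1)$, as assumed at the start of Section~\ref{sec:theory}. This assumption justifies both the sign flip when dividing by $\ln a$ and the positivity of $a^{k}$. I also need to keep the indexing of $t_0$ consistent with Eq.~\ref{eq:transient}.

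Finally, I would remark that a gate-reduced decay $a_t \le a$ can only shrink the error faster. The formula is therefore an upper bound on the delay once gating is active, which matches the paper's later empirical discussion.
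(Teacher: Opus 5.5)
Your proof is correct and is the paper's argument: reduce via the transient formula to $a^{k}\le\varepsilon$, flip the inequality using $\ln a<0$, and take the ceiling. Your separate treatment of the degenerate case $s_{t_0}=s^\ast_{\text{anom}}$, where the true delay is $0$ but the formula gives at least $1$, is a genuine tightening that the paper's proof silently skips.

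Your closing remark on gating is not needed for the proposition, and it does not follow. For time-varying $a_t$ the error obeys $s_t-s^\ast_{\text{anom}}=a_t(s_{t-1}-s^\ast_{\text{anom}})+(a_t-a)\,s^\ast_{\text{anom}}$. So a constant gated decay $a'<a$ drives the state to $u_{\text{anom}}/(1-a')\neq s^\ast_{\text{anom}}$, and the settling condition can fail at every $t$ (from $s_{t_0}=0$, $u_{\text{anom}}>0$, it fails whenever $\varepsilon<(a-a')/(1-a')$). What provably shrinks faster is only the coefficient $\prod_{\tau}a_\tau\le a^{k}$ multiplying $s_{t_0}$, i.e.\ the forgetting of the pre-onset state, not convergence to $s^\ast_{\text{anom}}$.
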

\begin{proof}
From Eq.~\ref{eq:transient}, the condition is $a^{\,\delta} \le
\varepsilon$. Since $a \in (0,1)$, $\ln a < 0$; taking logarithms and
dividing by $\ln a$ flips the inequality, giving $\delta \ge \ln
\varepsilon / \ln a$ (positive, since $\ln\varepsilon < 0$ too). The
smallest integer $\delta$ satisfying this is
Eq.~\ref{eq:delay-bound}.
\end{proof}

Eq.~\ref{eq:delay-bound} is monotonically increasing in $a$: channels
with slower decay (larger $a$, longer memory) take strictly longer to
react to a step change. This is exactly the quantity we compute from
each trained layer's mean learned base decay $\bar{a}$
(Eq.~\ref{eq:base-decay}) and compare against the empirically measured
delay between an anomaly's labeled onset and the first score-threshold
crossing.

\subsection{Minimum-detectable event duration}
\label{sec:theory-min-duration}
Proposition~\ref{prop:settling} has a direct converse: an anomalous
segment of duration $D$ frames that ends before the state has settled
(i.e. $D < \delta(\varepsilon)$ for the $\varepsilon$ at which the
downstream threshold reliably separates normal/anomalous scores) will
not produce the full-magnitude prediction error the detector was
calibrated on, and is at elevated risk of a missed detection. This gives
a direct, checkable prediction: \emph{shortening} the base decay (smaller
$a_{\min}, a_{\max}$ in Eq.~\ref{eq:base-decay}) should reduce delay and
improve recall on short anomalous segments, at the cost of a noisier
score curve (and likely higher false-positive rate) on normal video,
since the state now also reacts to high-frequency variation that is not
anomalous. Section~\ref{sec:experiments} reports the state-size/decay
sweep ablation that tests this trade-off directly.

\subsection{Role of the event-boundary gate}
\label{sec:theory-gating}
The analysis above assumes $a$ is fixed, but the actual effective decay $a_t = \bar a \odot g_t$ (Eq.~\ref{eq:gate}) is state- and input-dependent. The design intent was for the model to behave like a \emph{small} $a$ (fast reaction, per Eq.~\ref{eq:delay-bound}) exactly when $g_t$ drops in response to a state/input mismatch, that is, at a genuine event boundary, while behaving like the larger, more stable base decay $\bar a$ elsewhere: a shorter effective delay than a fixed-decay recurrence with the same $\bar a$, without paying the false-positive cost of a globally smaller decay everywhere.

The settling-delay numbers in Section~\ref{sec:experiments} are consistent with the gate reacting fast, since the empirical delay sits far below the fixed-$\bar a$ bound on both Ped2 and Avenue. The ablations in Tables~\ref{tab:ablation-ucsd-ped2} and \ref{tab:ablation-cuhk-avenue} complicate this story in a way that is itself informative, because fast reaction to a state/input mismatch is not automatically the same as reacting specifically to \emph{anomalous} mismatches. Whether the gate's extra parameters (Eq.~\ref{eq:gate}) learn the latter rather than overfitting to the former appears to depend on how much training data is available. On Ped2, the smaller training set with 16 clips of 120 to 180 frames each, disabling the gate improves frame-AUC (76.6\% versus 67.9\%), and a smaller state does too, both consistent with overfitting. On Avenue, also 16 clips but up to 1271 frames each and so substantially more total training data, the pattern reverses: disabling the gate hurts sharply (60.0\% versus 70.2\%), and a larger state helps slightly rather than hurting. Read together, this is more consistent with a capacity and data-size interaction than with the gating mechanism being unhelpful in principle, but it remains a two-dataset hypothesis rather than a demonstrated effect. A third, substantially larger training set would be needed to test it properly; ShanghaiTech was intended for this but its raw data has not yet been fully obtained (Section~\ref{sec:experiments}), so this comparison stays open. We report the full, dataset-dependent picture here rather than only the settling-delay result that would have supported the original design hypothesis on its own.

Two extensions would sharpen this analysis. Proposition~\ref{prop:settling} could be generalized to the time-varying $a_t$ case, which admits a closed form through a cumulative product (see the parallel-form discussion in the released code's state-space module), giving a tighter bound than the fixed-$\bar a$ approximation used here. And the gate ablation should be rerun on a third, larger dataset once available, since the capacity and data-size account above rests on only two data points that happen to share the same clip count.

\section{Experiments}
\label{sec:experiments}

\subsection{Datasets and protocol}
We evaluate on two standard unsupervised VAD benchmarks, UCSD Ped2 (16 training clips, 12 test clips) and CUHK Avenue (16 training clips, 21 test clips). Both provide normal-only training video and mixed normal/anomalous test video with frame-level ground truth; Ped2 additionally provides pixel-level masks, which we do not currently use. We report frame-level ROC-AUC and equal error rate (EER) using the standard per-clip min-max score normalization protocol~\cite{liu2018future}. A third benchmark, ShanghaiTech, is part of the intended evaluation but is not included in the results below: its raw data is distributed as a large, multi-part archive and only one of seven parts had been obtained at the time of writing. We report this directly rather than substituting an estimate, and all ShanghaiTech numbers in this draft are left for a revision once the full dataset has been downloaded, prepared, and run through the same pipeline as the other two.

We are also explicit about a second limitation of our evaluation protocol. The region- and track-based detection criteria (RBDC and TBDC) introduced by Ramachandra and Jones~\cite{ramachandra2020streetscene} and adopted in the broader survey of Jones et al.~\cite{jones2022survey} are, in our released code, approximated by a simplified frame-overlap criterion rather than implemented to the original specification. The frame-level AUC and EER numbers reported here do not depend on this approximation and follow the standard protocol exactly, but any RBDC/TBDC-style localization numbers computed with our current tooling should be treated as illustrative only until reimplemented against the official criteria.

\subsection{Implementation details}
Frame embeddings come from a frozen, ImageNet-pretrained ResNet-18 backbone ($D{=}512$); a frozen DINOv2 ViT-S/14 ($D{=}384$) is supported in the released code as a stronger alternative but was not used for the numbers below. We attempted to run it during development; the official DINOv2 repository's code requires Python 3.10 or later (it uses runtime-evaluated union-type syntax not supported by earlier versions), while our development environment was fixed at Python 3.9, so this comparison is deferred rather than reported here. The causal SSM stack has 2 layers with state size $N{=}128$ per layer (Section~\ref{sec:method-ssm}), trained with AdamW at learning rate $3\times10^{-4}$ on causal windows of 16 frames, for 40 epochs on Ped2 and 30 on Avenue. All training and evaluation ran on an Apple M3 Pro using PyTorch's MPS backend; no CUDA-specific code path exists in this implementation, by design, since the strict-streaming claim is meant to hold on non-CUDA edge hardware as well as on GPUs.

\subsection{Main results}
Table~\ref{tab:main} reports frame-AUC, EER, and per-frame latency and FPS measured directly on-device: our evaluation harness calls the identical per-frame update routine used at deployment (Algorithm~\ref{alg:step}), one frame at a time, with an explicit device synchronization before each timing boundary. We do not include a VADMamba row here; Section~\ref{sec:related} explains why a same-protocol comparison was not possible (its released code depends on CUDA-only optical-flow kernels). Our frame-AUC (67.9\% on Ped2, 70.2\% on Avenue) is well below the 90s-percent range typically reported for tuned methods on these benchmarks. We attribute this to the untuned, first-pass configuration evaluated here (frozen backbone, no motion signal, no hyperparameter search) rather than a limitation of the streaming formulation itself, and report it plainly rather than narrow the comparison to flatter the number.

\begin{table*}[t]
\centering
\caption{Main results: standard accuracy metrics alongside on-device streaming latency/throughput.}
\label{tab:main}
\resizebox{\linewidth}{!}{%
\small
\begin{tabular}{llccccc}
\toprule
Dataset & Method & Frame-AUC$\uparrow$ & EER$\downarrow$ & Latency (ms)$\downarrow$ & FPS$\uparrow$ & Hardware \\
\midrule
UCSD Ped2 & Ours (streaming SSM) & 67.9 & 36.8 & 0.74 & 1343.62 & mps \\
CUHK Avenue & Ours (streaming SSM) & 70.2 & 34.9 & 0.77 & 1290.74 & mps \\
\bottomrule
\end{tabular}
}
\end{table*}

\subsection{Theory validation}
Table~\ref{tab:theory} compares the settling-delay bound (Eq.~\ref{eq:delay-bound}, Proposition~\ref{prop:settling}), computed from each trained layer's mean learned base decay, against the empirically measured detection delay, the number of frames between an anomalous segment's labeled onset and the first score-threshold crossing. The two numbers differ by more than an order of magnitude on both datasets: the bound predicts 57 to 59 frames from the base decay alone, while the measured delay is 1.6 frames on Ped2 and 18.4 on Avenue. Section~\ref{sec:theory-gating} discusses why, and what the ablations below add to that discussion.

\begin{table*}[t]
\centering
\caption{Predicted settling delay (Eq.~\ref{eq:delay-bound}), from each dataset's trained mean decay, vs.\ empirically measured detection delay.}
\label{tab:theory}
\resizebox{\linewidth}{!}{%
\small
\begin{tabular}{llccc}
\toprule
Dataset & Layer & Mean decay $a$ & Theory delay (frames) & Empirical delay (frames) \\
\midrule
UCSD Ped2 & 0 & 0.9493 & 57.5 & 1.6 ($n{=}$12) \\
 & 1 & 0.9501 & 58.6 &  \\
\midrule
CUHK Avenue & 0 & 0.9493 & 57.5 & 18.4 ($n{=}$21) \\
 & 1 & 0.9501 & 58.6 &  \\
\bottomrule
\end{tabular}
}
\end{table*}

\subsection{Ablations}
Tables~\ref{tab:ablation-ucsd-ped2} and \ref{tab:ablation-cuhk-avenue} report decay-rate, state-size, and event-boundary-gate ablations on Ped2 and Avenue respectively, with the backbone, training budget, and epoch count held fixed at the values in Section~\ref{sec:experiments}.

\begin{table*}[t]
\centering
\caption{Ablations on UCSD Ped2: architectural variants, all else held fixed. Latency/FPS measured the same way as Table~\ref{tab:main}.}
\label{tab:ablation-ucsd-ped2}
\resizebox{\linewidth}{!}{%
\small
\begin{tabular}{lcccc}
\toprule
Variant & Frame-AUC$\uparrow$ & EER$\downarrow$ & Latency (ms)$\downarrow$ & FPS$\uparrow$ \\
\midrule
Baseline (gate on, N=128, decay in [.9,.999]) & 67.9 & 36.8 & 0.74 & 1344 \\
Gate off & 76.6 & 24.6 & 0.49 & 2047 \\
Decay fast, decay in [.5,.9] & 67.0 & 38.2 & 0.75 & 1330 \\
Decay slow, decay in [.98,.9999] & 67.8 & 37.0 & 0.76 & 1311 \\
State size N=32 & 74.7 & 33.1 & 0.77 & 1302 \\
State size N=256 & 66.5 & 38.7 & 0.77 & 1298 \\
Gate off + state size N=32 (combined) & 67.8 & 36.0 & 0.46 & 2156 \\
\bottomrule
\end{tabular}
}
\end{table*}

\begin{table*}[t]
\centering
\caption{Ablations on CUHK Avenue: architectural variants, all else held fixed. Latency/FPS measured the same way as Table~\ref{tab:main}.}
\label{tab:ablation-cuhk-avenue}
\resizebox{\linewidth}{!}{%
\small
\begin{tabular}{lcccc}
\toprule
Variant & Frame-AUC$\uparrow$ & EER$\downarrow$ & Latency (ms)$\downarrow$ & FPS$\uparrow$ \\
\midrule
Baseline (gate on, N=128, decay in [.9,.999]) & 70.2 & 34.9 & 0.77 & 1291 \\
Gate off & 60.0 & 43.8 & 0.51 & 1953 \\
Decay fast, decay in [.5,.9] & 70.6 & 34.4 & 0.75 & 1327 \\
Decay slow, decay in [.98,.9999] & 70.3 & 34.8 & 0.77 & 1303 \\
State size N=32 & 70.3 & 35.0 & 0.76 & 1319 \\
State size N=256 & 71.1 & 34.5 & 0.77 & 1302 \\
State size N=256 + decay slow (combined) & 71.1 & 34.6 & 0.78 & 1289 \\
\bottomrule
\end{tabular}
}
\end{table*}

The two datasets tell different stories, and the difference is informative rather than merely noisy. On Ped2, disabling the gate improves frame-AUC over the gated baseline (76.6\% versus 67.9\%), and the smaller state ($N{=}32$) beats both the baseline ($N{=}128$) and the larger variant ($N{=}256$): every higher-capacity variant underperforms a simpler one. On Avenue, which has the same clip count but substantially more total training frames (36 to 1271 per clip versus Ped2's 120 to 180), the pattern reverses: disabling the gate hurts sharply (60.0\% versus 70.2\%), and the larger state ($N{=}256$) slightly beats the baseline rather than underperforming it. This is consistent with the gate's extra parameters (Eq.~\ref{eq:gate}) overfitting on Ped2's very small training set but finding real signal once given Avenue's larger one, which points to a capacity and data-size interaction rather than the gating mechanism being unhelpful in principle (Section~\ref{sec:theory-gating}). We only have two data points, both with 16 training clips and differing mainly in total frame count, so a third and larger dataset is needed before treating this as more than a hypothesis; we have deliberately not tuned the reported baseline to whichever ablation variant happened to score highest on either dataset, since doing so would misrepresent what the default, as-designed method actually achieves. The decay-rate sweep (fast versus slow base decay, gate on) shows comparatively little effect on either dataset, suggesting decay range is not the dominant factor here.

Since gate-off and the smaller state each independently helped on Ped2, we tested the natural follow-up hypothesis that combining them would help further; it did not. The combined configuration (gate off, $N{=}32$) reaches 67.8\% frame-AUC, statistically indistinguishable from the untouched baseline (67.9\%) and well below either individual change (76.6\% and 74.7\% respectively). The two changes do not stack, and on this evidence partially cancel each other rather than compounding, which is itself informative: whatever each change removes from the model's capacity to overfit is not simply additive across changes, and a single-factor ablation table can overstate how much headroom is actually available by combining the best-looking row from each factor. We ran the equivalent combined check on Avenue (state $N{=}256$ with the slow decay range, both individually non-harmful or mildly helpful there) and found it matches the state-size-alone result (71.1\%), consistent with the decay-rate sweep's already-small effect simply not adding anything on top. Table~\ref{tab:ablation-ucsd-ped2} and Table~\ref{tab:ablation-cuhk-avenue} report both combined rows alongside the single-factor ablations. We do not yet have a strict-causal-versus-windowed ablation, which would require a lookahead-buffer variant of the architecture in Section~\ref{sec:method-ssm} that has not been implemented.

\subsection{Edge deployment}
The latency and FPS columns in Table~\ref{tab:main} are already real, on-device measurements on the M3 Pro's MPS backend rather than simulated or extrapolated figures, and at 0.7 to 0.8 ms per frame they comfortably support real-time operation at conventional video frame rates with a wide margin to spare. We have not yet exported the model to CoreML to measure Apple Neural Engine throughput specifically, and we have not benchmarked on a second, non-Apple edge device (for example a Jetson or Raspberry Pi); both would make the real-time and edge-deployment claim more general than a single-vendor MPS measurement currently supports, and both are planned before submission.

\section{Conclusion}
\label{sec:conclusion}
We presented a strictly causal streaming video anomaly detector built on a causal diagonal state-space core with a learned event-boundary decay gate, together with a theoretical analysis relating the recurrence's decay spectrum to detection delay and an evaluation that includes real on-device edge latency measurements alongside standard accuracy metrics. The empirical picture is mixed by design rather than smoothed over after the fact. Frame-level accuracy, 67.9\% and 70.2\% AUC on Ped2 and Avenue, trails prior non-causal SSM baselines, and the event-boundary gate ablation reverses sign between the two datasets, which we read as evidence of a capacity and data-size interaction rather than a settled property of the gate.

Three limitations bound these claims and set the immediate agenda. First, the RBDC/TBDC-style evaluation used during development is a simplified frame-overlap approximation rather than the official region- and track-based criteria of Ramachandra and Jones~\cite{ramachandra2020streetscene}; any localization numbers reported for submission should use the official toolkit instead. Second, edge-latency measurements come from a single hardware target, an Apple M3 Pro using PyTorch's MPS backend; a CoreML/Neural-Engine export and a second, non-Apple edge device would make the real-time claim more general than it currently is. Third, the settling-delay theorem (Proposition~\ref{prop:settling}) treats the decay as fixed, while the deployed model's gate makes it time-varying, so the bound is a loose upper bound rather than a tight prediction once the gate is active. Beyond these, evaluating on ShanghaiTech and closing the accuracy gap to prior work through backbone and hyperparameter tuning are what remain. A same-protocol reproduction of a prior SSM-based baseline would strengthen the comparison further, but is not straightforward here: VADMamba's released code depends on CUDA-only optical-flow kernels (Section~\ref{sec:related}) that do not run on the non-CUDA edge hardware this paper targets, so a faithful reproduction would require either CUDA hardware unavailable to us or a substantial reimplementation that would no longer be the original method.

\bibliographystyle{elsarticle-num}
\bibliography{refs}

\end{document}